\theoremstyle{definition}
\newtheorem{remark}{Remark}
\theoremstyle{plain}
\newtheorem{theorem}{Theorem}
\newtheorem{assumption}{Assumption}
\DeclareMathOperator{\cpt}{\mathbb{C}\mathbb{P}\mathbb{T}}
\DeclareMathOperator{\prob}{\mathbb{P}}
\DeclareMathOperator{\expectation}{\mathbb{E}}
\title[MARSRL]{Risk-Sensitive Multi-Agent Reinforcement Learning in Network Aggregative Markov Games}
\author{Hafez Ghaemi}
\affiliation{
  \institution{University of Tehran, School of ECE}
  \country{}
  }
\email{hafez.ghaemi@ut.ac.ir}
\author{Hamed Kebriaei}
\affiliation{
  \institution{University of Tehran, School of ECE}
  \country{}
  }
\email{kebriaei@ut.ac.ir}
\author{Alireza Ramezani Moghaddam}
\affiliation{
  \institution{University of Tehran, School of ECE}
  \country{}
  }
\email{a.ramezany@ut.ac.ir}
\author{Majid Nili Ahamdabadi}
\affiliation{
  \institution{University of Tehran, School of ECE}
  \country{}
  }
\email{mnili@ut.ac.ir}
\begin{abstract}
Classical multi-agent reinforcement learning (MARL) assumes risk neutrality and complete objectivity for agents. However, in settings where agents need to consider or model human economic or social preferences, a notion of risk must be incorporated into the RL optimization problem. This will be of greater importance in MARL where other human or non-human agents are involved, possibly with their own risk-sensitive policies. In this work, we consider risk-sensitive and non-cooperative MARL with cumulative prospect theory (CPT), a non-convex risk measure and a generalization of coherent measures of risk. CPT is capable of explaining loss aversion in humans and their tendency to overestimate/underestimate small/large probabilities. We propose a distributed sampling-based actor-critic (AC) algorithm with CPT risk for network aggregative Markov games (NAMGs), which we call Distributed Nested CPT-AC. Under a set of assumptions, we prove the convergence of the algorithm to a subjective notion of Markov perfect Nash equilibrium in NAMGs. The experimental results show that subjective CPT policies obtained by our algorithm can be different from the risk-neutral ones, and agents with a higher loss aversion are more inclined to socially isolate themselves in an NAMG.\footnote{Code available at \url{https://github.com/hafezgh/risk-sensitive-marl-namg}}
\end{abstract}
\keywords{Multi-Agent Reinforcement Learning, Actor-Critic, Aggregative Games, Risk-Sensitivity, Cumulative Prospect Theory}
\newcommand{\BibTeX}{\rm B\kern-.05em{\sc i\kern-.025em b}\kern-.08em\TeX}
\begin{document}


\pagestyle{fancy}
\fancyhead{}


\maketitle 


\section{Introduction}

Markov game (MG) is a common framework for studying multi-agent systems (MAS), and it is the main theoretical framework for multi-agent reinforcement learning (MARL) \cite{shapley1953stochastic,littman1994markov}. In classical MARL, each agent is assumed to have a risk-neutral objective, i.e., it tries to maximize a notion of expected return without taking into account subjective preferences of itself or of the other agents in the MAS. Risk-neutral MARL in MGs has seen great advances in recent years, especially in specific types of MGs, such as zero-sum MGs \cite{sayin2021decentralized,zhang2020model,alacaoglu2022natural,perolat2015approximate,qiu2021provably} and Markov potential games \cite{ding2022independent,fox2022independent,maheshwari2022independent,mguni2021learning,leonardos2021global}. However, the risk-neutral RL objective often falls short when representing agents with distinct subjective preferences, such as internal cognitive biases of themselves or of other agents. Thus, to address these preferences, agents integrate a risk measure into their RL objective, ushering into the realm of risk-sensitive reinforcement learning (RSRL). In general, the literature on risk-sensitive MARL is more sparse compared to single-agent RSRL. The majority of the works that consider risk-sensitive multi-agent MDPs are concerned not with an RL setting but either with theoretically proving the existence of Markov perfect Nash equilibria, or finding these equilibria using iterative algorithms given complete information of the game in a centralized setting for MDPs with specific constraints \cite{ghosh2023discrete,zhong2022chance,ghosh2022nonzero,pal2021zero,bacsar2021robust,zhang2021derivative}.

Risk in RL can be categorized into two main types based on the risk-sensitive objective, as delineated by Prashanth and Fu \citep{prashanth2022risk}. The first category, explicit risks, involves directly incorporating the risk measure into the objective function. In contrast, implicit risks are integrated by imposing a constraint on the RL stochastic optimization problem. Notably, in practice, implicit risk-sensitive objectives are often transformed into explicit objectives. This is achieved by formulating a Lagrangian and computing its gradient to employ algorithms founded on policy gradient (PG) methods \citep{prashanth2022risk}. Within the spectrum of implicit risk measures in RL and MDPs, notable examples include variance as risk (\citep{tamar2012policy,tamar2013temporal,prashanth2016variance} in single-agent RSRL, and \cite{reddy2019risk} in risk-sensitive MARL), and chance constraints (\citep{chow2017risk} in single-agent RSRL). On the other hand, explicit risk measures encompass entropic risk measures predicated on exponential return (\citep{borkar2001sensitivity,fei2021exponential,moharrami2022policy} in single-agent RSRL and \cite{noorani2022risk,soorki2021ultra} in risk-sensitive MARL), coherent risk measures, and cumulative prospect theory (CPT).

Coherent risk measures \citep{artzner1999coherent,delbaen2002coherent}, such as the well-known conditional value at risk (CVaR), mean semi-deviation \cite{shapiro2021lectures}, and spectral risk \citep{acerbi2002spectral}, are widely used in the fields of economy and operations research. Their application has also been explored within MDPs as dynamic risk measures. Osogami et al. \cite{osogami2012robustness} showed that risk-sensitive MDPs governed by Markov coherent risk measures can be classified under the domain of robust MDPs. Subsequently, dynamic programming methodologies have been suggested for this type of MDPs \cite{ruszczynski2010risk,cavus2014risk}. Building on these works, PG-based techniques and actor-critic (AC) algorithms have also been developed for RSRL with coherent risk measures, as detailed in \citep{la2013actor,tamar2014scaling,chow2014algorithms,tamar2015policy} for single-agent RSRL, and in \cite{zhu2022nash,munir2021risk,qiu2021rmix} for risk-sensitive MARL.

\paragraph{\textbf{CPT Background.}} The concept of Prospect Theory (PT) emerged as an alternative model to expected utility theory, providing a more accurate model of human decision-making under uncertainty \citep{kahneman1979prospect}. To enhance the applicability of PT, Cumulative Prospect Theory (CPT) was subsequently introduced \citep{tversky1992advances}. Unlike PT, CPT applies weighting functions to cumulative probabilities, addressing them separately for positive and negative outcomes. By integrating these probability weighting functions and a non-linear utility function, CPT successfully illustrates varying human attitudes towards potential gains and losses against a subjective reference point. Central to CPT is the idea that humans typically exhibit aversion to losses, i.e., they generally take more risks when facing potential gains and take fewer risks when confronted with potential losses. Additionally, CPT's framework elucidates human inclinations to overestimate small probabilities and underestimate large ones during uncertain decision-making. When we consider CPT in the context of either static or dynamic Markov risk measures, it meets only two of the four requirements that define a coherent risk measure. A risk measure, when applied to a random variable (r.v.) representing potential outcomes, is deemed coherent if it has the following four characteristics: convexity, monotonicity, translation invariance, and positive homogeneity \cite{artzner1999coherent}. Among these, the CPT risk measure only possesses monotonicity and positive homogeneity, and is neither translation invariant nor convex. The non-coherent nature of CPT makes it more challenging to work with mathematically. CPT can be seen as a generalization of coherent risk measures, i.e., by appropriate selection of CPT probability weighting functions, one can derive various coherent risk measure formulations \cite{lin2013dynamic,jie2018stochastic}.

\paragraph{\textbf{Contributions.}} In this work, we consider risk-sensitive MARL with CPT risk measure in network aggregative Markov games (NAMGs), and propose a distributed actor-critic algorithm to find risk-sensitive policies for each agent. We derive a policy gradient theorem for CPT MARL based on a subjective steady-state distribution of the MDP from each agent's prespective, and provide a sampling-based approach to estimate the value functions with asymptotic consistency. Since CPT is a generalization of coherent risk measures, our PG theorem generalizes the previous PG works for static and dynamic coherent risk measures \cite{chow2014algorithms,tamar2015policy}. Under a set of assumptions, we prove the convergence of our algorithm to a subjective and risk-sensitive notion of Markov perfect Nash equilibrium (MPNE) which we show is unique given the aforementioned assumptions. Experimentally, we also demonstrate that a higher loss aversion can make agents more conservative and increase their tendency for social isolation in an NAMG.

\begin{remark}{(\emph{Application})}
A potential application of the proposed framework is calculating CPT risk-sensitive policies of human agents in real-world settings, such as driving scenarios or financial markets, that can be modeled by NAMGs. Subsequently, these policies can serve dual purposes: guiding agents towards strategies optimized for their individual preferences or facilitating social or economic changes in the environment to steer agents in a direction that aligns with desired outcomes.
\end{remark}

\section{Related Works}
In the context of Markov risk measures in MDPs, CPT is articulated through two distinct formulations. The first formulation is the nested structure, wherein the CPT operator is applied to the cumulative return after each step (action taken) \citep{lin2013dynamic,lin2013stochastic,lin2018probabilistically}. An important advantage of this formulation is that it ensures the existence of a Bellman optimality equation. Recently, Tian et al. \cite{tian2021bounded} extended this nested formulation to a multi-agent setting with agents that are characterized by bounded rationality and operating under quantal level-$k$ strategies \citep{wright2017predicting}. Restricting their approach to deterministic policies, they propose a centralized value iteration algorithm to determine optimal risk-sensitive policies given a complete model of the environment and the reward functions.

In the second formulation, the CPT operator is applied solely to the agent's final cumulative return at the end of every episode \citep{prashanth2016cumulative,jie2018stochastic}. Contrary to the nested formulation, this formulation does not have a Bellman equation. However, it can be approached from a stochastic optimization perspective, allowing policy optimization through a gradient-based method akin to PG techniques \cite{jie2018stochastic}. This PG method has also been implemented by considering neural networks for policy approximation \citep{markowitz2021deep}. It is important to emphasize that the absence of the Bellman equation in this context necessitates policy optimization exclusively through offline Monte Carlo sampling constrained by a finite time horizon.

To date, no cognitive research has been conducted to ascertain which of the two CPT RSRL formulations best represents the dynamic risk behavior exhibited by humans, whether in single-agent or multi-agent environments. Nonetheless, the following can be said about the two formulations:

\begin{itemize}
    \item The nested formulation benefits from the presence of a Bellman equation, enabling the use of online actor-critic algorithms and a recursively defined value function. This advantage is absent in the non-nested formulation, where it is only plausible to use PG techniques using offline Monte Carlo sampling.
    \item In both formulations, due to the substitution of the expectation operator with the non-linear CPT operator, it is possible for the optimal policy to exhibit non-deterministic characteristics \cite{lin2018probabilistically,jie2018stochastic} even in single-agent RL.
    \item The non-nested formulation aligns well with finite-horizon episodic tasks where the agent is rewarded at the end of each episode. However, its applicability is limited when considering infinite-horizon tasks. Conversely, the nested formulation and its Bellman equation are suitable for tasks where the agent is rewarded at every timestep.
    \item In scenarios without complete information of the model, the reward function, or the policies of other agents, both formulations necessitate a strategy for estimating the CPT value given that we have access to a simulator of the MDP or a large enough experience dictionary (replay buffer). Such an estimation technique tailored for the non-nested formulation has been introduced by Jie et al. \cite{jie2018stochastic}.
\end{itemize}

\begin{remark}{(\emph{Motivation})}
Given the above considerations, in risk-sensitive MARL with CPT, in a setting where the agents interact in an online infinite-horizon MDP with limited information about other agents' policies, the nested CPT formulation is the viable option to adopt. Due to the possibility of non-deterministic optimal policies for each agent in MARL, we opt for actor-critic style algorithms using parameterized policies. Furthermore, we consider NAMGs as our MARL framework due to three reasons. First, because they are inherently suited to distributed algorithms. Second, given a set of assumptions, NAMG, and its risk-sensitive version can be shown to have a unique Markov perfect Nash equilibrium which our algorithm converges to. And third, because NAMGs are a suitable framework to show the tangible effect of loss aversion in human-like agents and on their tendency for social isolation and conservatism.
\end{remark}

\section{Preliminaries}
\subsection{Cumulative Prospect Theory}
Given a real-valued r.v. $X$ with distribution $\prob(X)$, a reference point $x_0$, two monotonically non-decreasing weighting functions, $\omega^+: [0,1] \rightarrow [0,1], \omega^-: [0,1] \rightarrow [0,1]$, utility functions $u^+: \mathbb{R}^+ \rightarrow \mathbb{R}^+, u^-: \mathbb{R}^-\rightarrow \mathbb{R}^+$, and given appropriate integrability assumptions, we can define the CPT value using Choquet integrals as

\begin{equation}
\begin{split}
    \cpt_{\prob}[X] &:= \int^{\infty}_0 \omega^+(\prob(u^+((X-x_0)_+)>x))dx-\\ &\int^{\infty}_0 \omega^-(\prob(u^-((X-x_0)_-)>x))dx., 
\end{split}
\end{equation}

where we denote $(.)_+=max(0,.)$ and $(.)_+=-min(0,.)$. For a discrete r.v., we can define the CPT value similarly as 
\begin{subequations} \label{eq:cptdef}
\begin{equation}
\begin{split}
    \cpt_{\prob}[X] &:= \sum^n_{i=0} \phi^+(\prob{P}(X=x_i)) u^+(x_i-x^0) \\
    &- \sum^{-1}_{-m} \phi^-(\prob(X=x_i)) u^-(x_i-x^0),
\end{split}
\end{equation}
\begin{equation}
\begin{split}
    \phi^+(\prob(X=x_i)) &= \omega^+\left(\sum_{j=i}^n \prob(X=x_j)\right)\\&-\omega^+\left(\sum_{j=i+1}^n \prob(X=x_j))\right),
\end{split}
\end{equation}
\begin{equation}
\begin{split}
    \phi^-(\prob(X=x_i)) &=\omega^-\left(\sum_{j=-m}^i \prob(X=x_j)\right)\\&-\omega^-\left(\sum_{j=-m}^{i-1} \prob(X=x_j)\right),
\end{split}
\end{equation}
\end{subequations}

where $x_0$ serves as a reference point that separates gains and losses. Without loss of generality, we assume $x_0=0$ throughout this paper. Conventional representations of CPT weighting functions include $\omega^+(p) = \frac{p^{\gamma}}{(p^{\gamma} + (1-p)^{\gamma})^{(1/\gamma)}}$ and $\omega^-(p) = \frac{p^{\delta}}{(p^{\delta} + (1-p)^{\delta})^{(1/\delta)}}$ \citep{tversky1992advances}, or $\omega^+(p) = \exp(-(-ln p)^{\gamma})$ and $\omega^-(p) = \exp(-(-ln p)^{\delta})$ \citep{prelec1998probability}. Note that by setting $\delta$ and $\gamma$ equal to $1$, the definition of expected utility $E_{\prob}[u(X)]$ is recovered which shows that CPT is a generalization of expected utility theory. Furthermore, $u^+$ and $u^-$ are usually concave functions ($-u^-$ is convex) to reflect the higher sensitivity of humans towards losses compared to gains \cite{kahneman1979prospect}. As a result, the utility function can have analytical representations $u^+(x)=x^{\alpha}$ if $x\geq 0$, and $u^-(x)=\lambda (-x)^{\beta}$ if $x<0$. The parameters $\gamma, \delta, \alpha, \beta,$ and $\lambda$ are subjective model parameters that can differ from person to person based on their level of risk-aversion and individual characteristics. The conventional representations of weighting and utility functions given a set of subjective parameters are plotted in Figures~\ref{fig:wieghting} and \ref{fig:util}.

\begin{figure}[h]
 \centering
 \includegraphics[width=5cm,height=5cm,keepaspectratio]{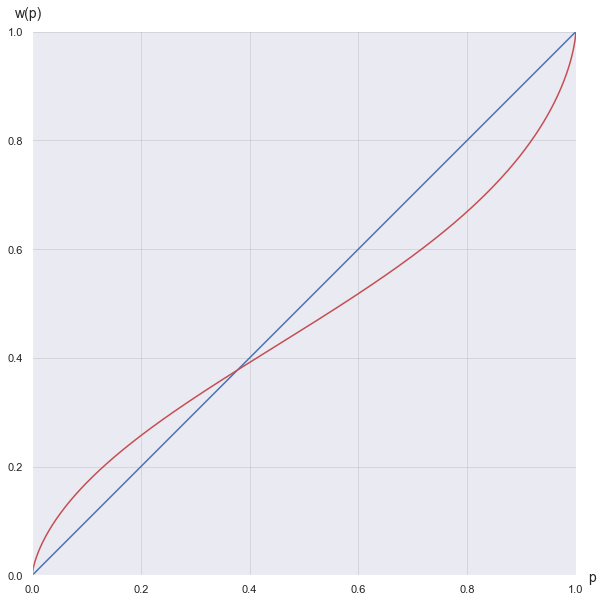}
 \caption{Conventional CPT weighting functions; $\omega^+(p) = \frac{p^{\gamma}}{(p^{\gamma} + (1-p)^{\gamma})^{(1/\gamma)}}$ and $\omega^-(p) = \frac{p^{\delta}}{(p^{\delta} + (1-p)^{\delta})^{(1/\delta)}}$ with $\gamma=\delta=0.69$.}
 \label{fig:wieghting}
\end{figure}
\begin{figure}[h]
 \centering
 \includegraphics[width=5cm,height=5cm,keepaspectratio]{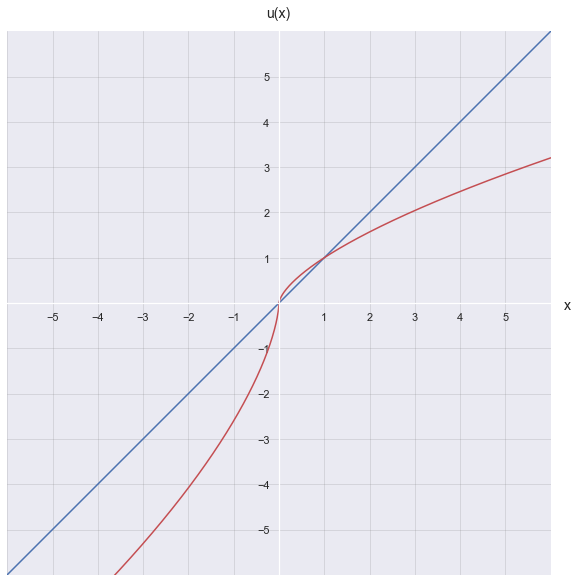}
  \caption{Conventional CPT utility functions; The plot shows $u^+(x)=x^{\alpha}$ for $x\geq 0$, and $-u^-(x)=-\lambda(-x)^{\beta})$ for $x<0$, with $\alpha=\beta=0.65$ and $\lambda=2.6$.}
 \label{fig:util}
\end{figure}

\subsection{Network Aggregative Markov Games}
Throughout this paper, we assume that agents are interacting in an ergodic network aggregative Markov game with a discounted infinite-horizon criterion. An NAMG with $N$ players is an MG denoted by $M=(S,N,A,R,P,G,\gamma, p_{s_0})$, where $S$ is the state space, $A=A_1\times ... \times A_N$ is the joint action space; $R: S\times A\times S \rightarrow \mathbb{R}^N$ is a joint reward function bounded in $[-R_{max}, R_{max}]$ where $R_{max}>0$; $P(.|s,a)$ is the MDP transition probability distribution; $\mathcal{G}(\mathcal{N},\mathcal{E})$ is a graph with edge set $\mathcal{E}$ on which each agent interacts with its neighbors; $\gamma$ is the MDP's discount factor; and $p_{s_0}$ is the initial state distribution. In NAMG, for each agent $n$, the reward function is a function of its own action and an aggregative function of other agents' actions,

\begin{equation}\label{rew}
     R^i(s,a^i,a^{-i}) = R^i(s,a^i,\sigma^i(a^{-i})),
\end{equation}

where we have,

\begin{equation}
\sigma^i(a^{-i}) = \sum_{j \in \mathcal{N} \backslash i}\omega_{ij} a^j,
\end{equation}

where $\omega$ are the edge weights of the communication graph $\mathcal{G}$, with $w_{ij}$ denoting the weight of the edge from $j$ to $i$. Therefore, given the graph, and by observing its neighbors' actions, agent $i$ is able to calculate $\sigma^i(a^{-i})$. Figure~\ref{fig:namg} shows a schematic of an NAMG.

\begin{figure}[h]
 \centering
 \includegraphics[width=10cm,height=5cm,keepaspectratio]{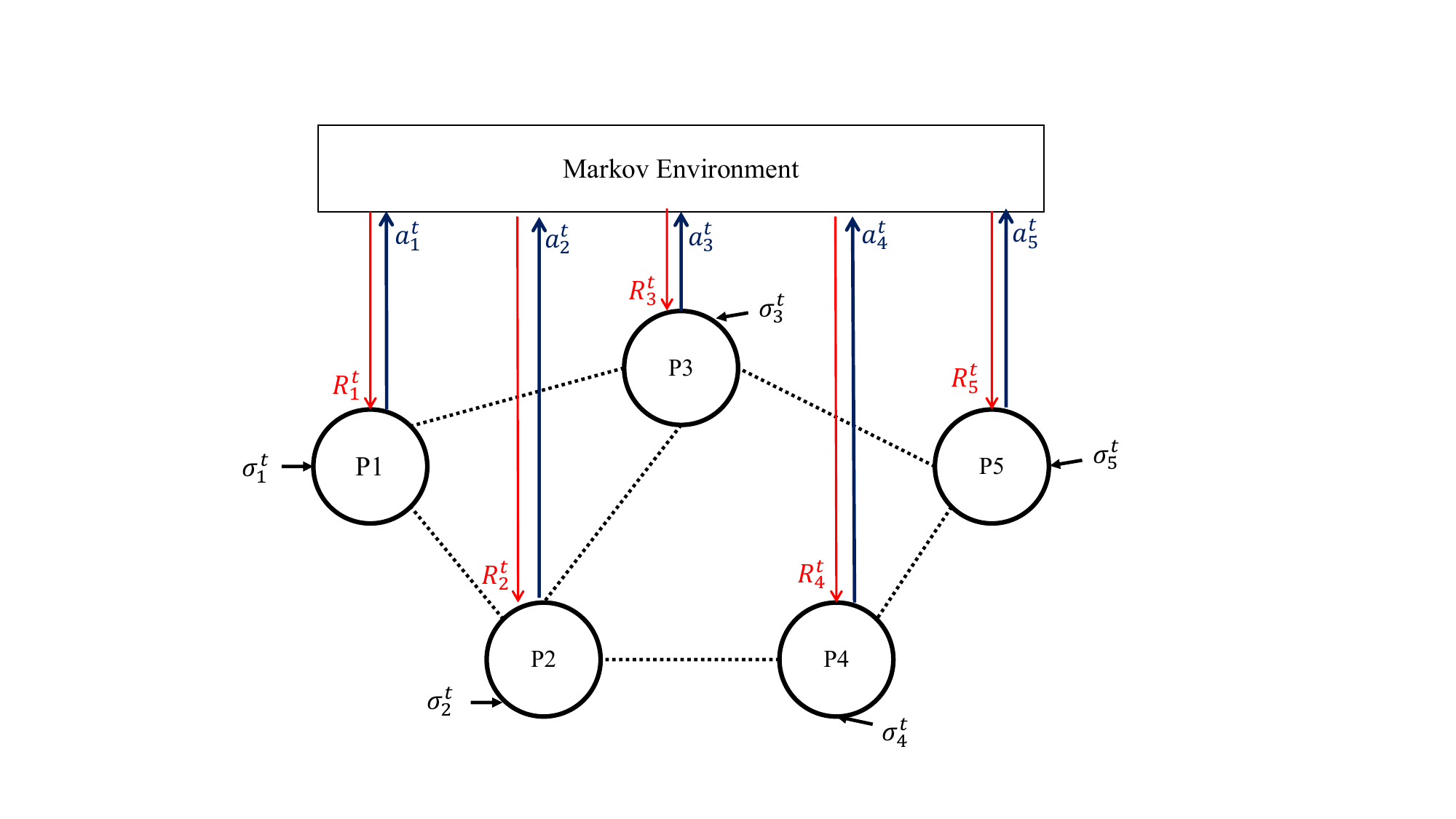}
  \caption{A network aggregative Markov game}
 \label{fig:namg}
\end{figure}

Previously, in various domains, such as resource allocation \cite{deng2019distributed}, social networks \cite{zhu2013pricing}, electrical microgrids \cite{tan2021distributed}, and power systems \cite{deng2021distributed}, either single-state network aggregative games (NAGs), or dynamic network aggregative games have been studied in risk-neutral setting. Furthermore, most of the theoretical works in this domain have focused on studying convergence to the Markov perfect Nash or Stackelberg equilibrium in single-state or dynamic NAGs with quadratic cost/reward functions that ensure the uniqueness of the equilibrium \cite{parise2020distributed, cenedese2020time, shokri2020leader, saffar2017pricing}. In this paper, for the first time, we consider risk-sensitive NAMGs.

\subsection{CPT Risk-Sensitive MARL Objective in NAMGs}

Using the nested formulation, the objective of the risk-sensitive agent $i$ in an NAMG $\max_{\pi^i} J^{\pi^i,\pi^{-i}}$ will be equivalent to 

\begin{equation} \label{eq:cpt-multi-obj-orig}
\begin{aligned}
	\max_{\pi^i} V^i_{\pi} (s_0)  &= \max_{\pi^i} \cpt_{\pi(s_0,.)\times \prob(.|s_0,a_0)} \big[R^i(s_{\tau},a_{\tau})+...\\
	&+\gamma^{\tau}\cpt_{\pi(s_{\tau},.)\times \prob(.|s_{\tau},a_{\tau})}\big[R^i(s_{\tau},a_{\tau})+...\big]\big]\\
	&= \max_{\pi^i} \cpt_{\pi(s_0,.)\times \prob(.|s_0,a_0)} \big[R^i(s_0,a_0)+ \gamma V^i_{\pi}(s_1)\big],
\end{aligned}
\end{equation}

where \( \pi(s_t,.) = \pi^i(s_t,.) \times \pi^{-i}(s_t,.) \) and \( a_t = (a^i_t, a^{-i}_t) \), respectively. Using the properties of NAMGs, and considering $\prob(\sigma^{-i}_0|s_0)$ as the probability that $\sigma^{-i}_0$ occurs at state $s_0$ for agent $i$, we can rewrite the objective as

\begin{equation}\small \label{eq:cpt-multi-obj}
\max_{\pi^i} V^i_{\pi} (s_0) = \max_{\pi^i} \cpt_{\pi^i(a^i_0|s_0)\times \prob(\sigma^{-i}_0|s_0) \times \prob(s_1|s_0,a_0)} \big[R^i(s_0,a_0)+ \gamma V^i_{\pi}(s_1)\big].
\end{equation}

Therefore, by observing the actions of neighboring agents and calculating the aggregative term $\sigma^{-i}$, agent $i$ can treat $\prob(\sigma^{-i}|s)$ as a probability distribution similar to the transition probabilities for each state.

\section{Distributed Nested CPT Policy Gradient} \label{sec:cptpg}
In this section, we derive a gradient expression for the Markov dynamic CPT risk measure in NAMGs, represented by the gradient of the initial state's value function in an ergodic CPT risk-sensitive NAMG, $\nabla V^i_{\pi_{\theta}}(s_0)$. Before presenting the PG theorem, we state the following assumption,

\begin{assumption}\label{assumption:w}
The weight functions $w^{\pm}$ are double differentiable, and the derivatives $w^{'}_{\pm}$ are Lipschitz continuous with common constant $L$. Furthermore, the utility functions $u^{\pm}$ are differentiable (denoted by $u^{'}_{\pm}$) for all agents.
\end{assumption}

The above assumption may seem strict at first. However, conventional forms of the CPT utility functions, specifically $u^+(x)=x^{\alpha}$ and $-u^-(x)=-\lambda(-x)^{\beta}$, along with the weighting functions $\omega^+(p) = \frac{p^{\gamma}}{(p^{\gamma} + (1-p)^{\gamma})^{(1/\gamma)}}$ and $\omega^-(p) = \frac{p^{\delta}}{(p^{\delta} + (1-p)^{\delta})^{(1/\delta)}}$, depicted in Figures \ref{fig:wieghting} and \ref{fig:util}, satisfy this assumption.

\begin{theorem} (Nested CPT Policy Gradient) \label{theo:cptpg}

Given Assumption~\ref{assumption:w}, the gradient of the CPT return for agent $i$, $V^i_{\pi_{\theta}}(s_0)$, with respect to the policy parameter $\theta^i$ is

\begin{equation} \label{eq:grad}
\begin{split}
    \nabla V^i_{\pi_{\theta}}(s_0) \propto& \expectation_{\mu_{cpt}^i(s)}\bigg[ \sum_{a,s'}\frac{\partial\phi}{\partial(\pi^i_{\theta}(a^i|s)\prob(\sigma^{-i}|s)\prob(s'|s,a))}\\&\prob(\sigma^{-i}|s)\prob(s'|s,a)(\nabla\pi_{\theta^i}(a^i|s)) u(R^i(s,a^i,\sigma^{-i},s')+\gamma V^i_{\pi_{\theta}}(s'))\bigg],
\end{split}
\end{equation}

where, $\phi$ and $u$ represent the CPT cumulative weighting and utility functions of the agent from \eqref{eq:cptdef} (superscript $i$ is dropped). The distribution $\mu_{cpt}^i$ is a \emph{subjective} steady-state probability distribution of the MDP in which $\mu_{cpt}^i(s) = \frac{\eta_{cpt}^i(s)}{\sum_{s \in S}\eta_{cpt}^i(s)}$, where $\eta_{\cpt}^i(s)$ is a subjective measure of time spent in each state and can be obtained by solving the following system of linear equations,

\begin{equation} \label{eq:eta}
     \eta_{cpt}^i(s) = p_0(s) + \sum_{\bar{s} \in S}\eta_{cpt}^i(\bar{s})\sum_{a \in A}\phi(\pi(a^i|\bar{s})\prob(\sigma^{-i}|\bar{s})\prob (s|\bar{s}, a))\frac{\partial u}{\partial V^i_{\pi_{\theta}}}(s),
\end{equation}

where $p_0(s)$ denotes the probability that the Markov chain starts in state $s$, and $u$ and $\phi$ are the utility cumulative weighting functions of agent $i$ from \eqref{eq:cptdef} ($u^{\pm}$ and $\phi^{\pm}$ are chosen according to the sign of $R^i(\bar{s},a,s)+\gamma V^i_{\pi_{\theta}}(s)$).

\end{theorem}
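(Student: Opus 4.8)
\emph{Proof plan.} The plan is to mirror the classical policy-gradient derivation, with the CPT weighting $\phi$ and utility $u$ taking the roles played by the transition probabilities and the identity map. I would start from the discrete form of the CPT Bellman equation implied by \eqref{eq:cpt-multi-obj} and \eqref{eq:cptdef}: abbreviating the effective one-step kernel by $q(s,a,s') := \pi^i_\theta(a^i|s)\,\prob(\sigma^{-i}|s)\,\prob(s'|s,a)$ and the bootstrapped target by $g(s,a,s') := R^i(s,a^i,\sigma^{-i},s') + \gamma V^i_{\pi_\theta}(s')$, this equation reads $V^i_{\pi_\theta}(s) = \sum_{a,s'}\phi\!\left(q(s,a,s')\right) u\!\left(g(s,a,s')\right)$, with $\phi^{\pm},u^{\pm}$ selected according to $\mathrm{sign}(g)$. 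First I would check that, under ergodicity of the NAMG, $\gamma<1$, the bound $R_{max}$ on $R^i$, and Assumption~\ref{assumption:w}, the CPT Bellman operator is a sup-norm contraction, so $V^i_{\pi_\theta}$ exists, is unique, and is bounded; together with differentiability of $\pi_\theta$ and of $\phi,u$ (Assumption~\ref{assumption:w}) this makes $\theta\mapsto V^i_{\pi_\theta}(s)$ differentiable on the generic set of parameters where no $g(s,a,s')$ vanishes and the comonotone ordering underlying the cumulative weights $\phi$ in \eqref{eq:cptdef} is locally constant.

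Next I would differentiate the Bellman identity with respect to $\theta^i$. Since only $\pi^i_\theta$ depends on $\theta^i$ inside $q$, the chain rule gives $\nabla_{\theta^i} q(s,a,s') = \prob(\sigma^{-i}|s)\prob(s'|s,a)\nabla\pi_{\theta^i}(a^i|s)$ and $\nabla_{\theta^i} g(s,a,s') = \gamma\nabla V^i_{\pi_\theta}(s')$, leading to the affine fixed-point relation
\begin{equation*}
\nabla V^i_{\pi_\theta}(s) = h(s) + \sum_{s'}\Big(\sum_a \phi\!\left(q(s,a,s')\right)\tfrac{\partial u}{\partial V^i_{\pi_\theta}}(s')\Big)\nabla V^i_{\pi_\theta}(s'),
\end{equation*}
where $h(s) := \sum_{a,s'}\frac{\partial\phi}{\partial q}(s,a,s')\,\prob(\sigma^{-i}|s)\prob(s'|s,a)\,\nabla\pi_{\theta^i}(a^i|s)\,u\!\left(g(s,a,s')\right)$ is exactly the exogenous term in \eqref{eq:grad} and $\frac{\partial u}{\partial V^i_{\pi_\theta}} = \gamma\,u'_{\pm}(g)$. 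Writing $\tilde P(s\to s') := \sum_a \phi\!\left(q(s,a,s')\right)\tfrac{\partial u}{\partial V^i_{\pi_\theta}}(s')$ for the induced subjective (sub-stochastic) kernel, I would then show that its spectral radius is strictly below one — the factor $\gamma<1$, the bound $R_{max}$, monotonicity and positive homogeneity of $\cpt$, and boundedness of the relevant derivatives from Assumption~\ref{assumption:w} are what make $\|\tilde P\|<1$ in the appropriate operator norm — so that the Neumann series converges and
\begin{equation*}
\nabla V^i_{\pi_\theta}(s_0) = \sum_{t\ge 0}\sum_{s}\tilde P^{\,t}(s_0\to s)\,h(s) = \sum_s \eta^i_{cpt}(s)\,h(s),
\end{equation*}
with $\eta^i_{cpt}(s) := \sum_{t\ge0}\tilde P^{\,t}(s_0\to s)$ (and, more generally, $\eta^i_{cpt} = \sum_{t\ge0}p_0\tilde P^{\,t}$ when the chain starts from $p_0$). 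By construction this $\eta^i_{cpt}$ is the unique solution of the linear system $\eta = p_0 + \eta\tilde P$, which is precisely \eqref{eq:eta}. Finally, multiplying and dividing by $\sum_{s}\eta^i_{cpt}(s)$ rewrites $\sum_s\eta^i_{cpt}(s)h(s)$ as $\big(\sum_s\eta^i_{cpt}(s)\big)\,\expectation_{\mu^i_{cpt}(s)}\!\left[h(s)\right]$ with $\mu^i_{cpt}(s) = \eta^i_{cpt}(s)/\sum_{s}\eta^i_{cpt}(s)$, which yields \eqref{eq:grad} up to the stated proportionality constant.

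The hard part will be the differentiation step together with the contraction/Neumann-series argument for the \emph{subjective} kernel. Unlike the risk-neutral case, $\sum_{a,s'}\phi\!\left(q(s,a,s')\right)\ne 1$ in general, and each cumulative weight $\phi\!\left(q(s,a,s')\right)$ actually depends on \emph{all} the one-step probabilities through the ordered partial sums in \eqref{eq:cptdef}; I would need a summation-by-parts argument on the comonotone ordering to see that the total derivative collapses to the single term $\frac{\partial\phi}{\partial q}\,\nabla q$ displayed in \eqref{eq:grad}, and then to verify that the resulting operator $\tilde P$ still contracts. The sign-switch between $(\phi^+,u^+)$ and $(\phi^-,u^-)$ at $g=0$, and the possible blow-up of $u'_{\pm}$ near the reference point (e.g.\ $u^+(x)=x^\alpha$ with $\alpha<1$), have to be excluded on a generic set of $\theta$; this is where Assumption~\ref{assumption:w} and the boundedness of the CPT return enter. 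Once these regularity points are settled, the unrolling of the recursion and the renormalization into an expectation under $\mu^i_{cpt}$ are routine.
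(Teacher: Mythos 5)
Your plan is correct and follows essentially the same route as the paper's proof: differentiate the recursive CPT value identity, unroll the resulting linear recursion in $\nabla V^i_{\pi_\theta}$ to identify the subjective visitation measure $\eta^i_{cpt}$ (your $\eta = p_0 + \eta\tilde P$ is exactly the paper's $DPr$-unrolling and \eqref{eq:eta}), then normalize to $\mu^i_{cpt}$ and apply the chain rule to obtain \eqref{eq:grad}. Your Neumann-series/contraction framing and the regularity caveats (dependence of the cumulative weights $\phi$ on all ordered partial sums, the sign switch at $g=0$) are simply a more explicit treatment of steps the paper carries out by direct unrolling without comment, not a different argument.
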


\begin{proof}
Considering agent $i$, we drop the subscript $i$ and denote $R^i(s_t,a_t,s_{t+1})$ as $R_t$ and $V^i_{\pi_{\theta}}(s_t)$ as $V_{\pi_{\theta}}(s_t)$. Furthermore, $\pi_{\theta}(s,a)$ used below, where $a$ is equivalent to $(a^i,a^{-i})$, is the more general case of joint policies in Markov games, which encompasses $\pi_{\theta}(a^i|s)\prob(\sigma^{-i}|s)$ in an NAMG. The gradient of the CPT risk-sensitive return considering its recursive definition can be written as
\begin{equation} \label{eq:gradexpand}
\small
\begin{split}
    \nabla V_{\pi_{\theta}}(s_0) &=\\ &\nabla \left[\sum_{a_0,s_{1}}\phi (\pi(s_0, a_0)\prob(s_{1}|s_0, a_0)) u(R_0 + V_{\pi_{\theta}}(s_{1}))\right]
     \\& = \sum_{a_0,s_{1}} \bigg[\nabla\phi (\pi(s_0, a_0)\prob(s_{1}|s_0, a_0)) u(R_0+V_{\pi_{\theta}}(s_{1})) \\&+ \phi (\prob(s_{1}|s_0, a_0)) \nabla u(R_0 + V_{\pi_{\theta}}(s_{1}))\bigg]
     \\& = \sum_{a_0,s_{1}} \Bigg[\nabla\phi (\pi(s_0, a_0)\prob(s_{1}|s_0, a_0)) u(R_0+V_{\pi_{\theta}}(s_{1})) \\&+ \phi (\pi(s_0, a_0) \prob(s_{1}|s_0, a_0))\frac{\partial u}{\partial V_{\pi_{\theta}}(s_1)}
     \\ & \nabla \bigg[\sum_{a_1,s_{2}} \phi (\pi(s_1, a_1)\prob(s_{2}|s_1, a_1)) u(R_1 + V_{\pi_{\theta}}(s_{2}))\bigg]\Bigg]
     \\ & = \sum_{a_0,s_1} \Bigg[\nabla\phi (\pi(s_0, a_0)\prob(s_{1}|s_0, a_0)) u(R_0 + V_{\pi_{\theta}}(s_{1})) \\&+ \phi (\pi(s_0, a_0)\prob(s_{1}|s_0, a_0))\frac{\partial u}{\partial V_{\pi_{\theta}}(s_1)}
     \\& \sum_{a_1,s_2}\bigg[\nabla\phi (\pi(s_1, a_1)\prob(s_{2}|s_1, a_1)) u(R_1+V_{\pi_{\theta}}(s_{2})) \\&+ \phi (\pi(s_1, a_1)\prob(s_{2}|s_1, a_1)) \nabla u(R_1+V_{\pi_{\theta}}(s_{2}))\bigg]\Bigg].
\end{split}
\end{equation}

We define 
\begin{equation}
\small
    DPr(s_0\rightarrow s_1, k=1, \pi_{\theta}):=\sum_{a_0}\phi(\pi(s_0, a_0)\prob(s_{1}|s_0, a_0))\frac{\partial u}{\partial V_{\pi_{\theta}}(s_1)}
\end{equation}

as the \emph{subjective (distorted)} visitation probability of $s_1$ right after $s_0$ following policy $\pi_{\theta}$. Note that since $u$ is a non-decreasing function with positive derivatives everywhere and $\phi$ is a function that maps $[0,1]$ to $[0,1]$, this term is always positive. By defining $DPr(s_0\rightarrow s_0, 0, \pi_{\theta}):=1$, by recursion, we can write the subjective probability of visiting state $s_{k+1}$ after $k+1$ steps, starting from $s_0$ and following policy $\pi_{\theta}$ as

\begin{equation}
\small
\begin{split}
    &DPr(s_0\rightarrow s_{k+1}, k+1, \pi_{\theta})=\\&\sum_{s_k} DPr(s_0\rightarrow s_{k}, k, \pi_{\theta})DPr(s_k\rightarrow s_{k+1},1, \pi_{\theta})
\end{split}
\end{equation}

Therefore, after repeated unrolling, we can write \eqref{eq:gradexpand} as
\begin{equation}
    \begin{split}
        &\nabla V_{\pi_{\theta}}(s_0) =
         \sum_{s}\Bigg(\bigg(\sum_{k=0}^{\infty}DPr(s_0\rightarrow s,k,\pi_{\theta})\bigg)\\&\sum_{a,s'}\nabla \phi(\pi(s,a)\prob(s'|s, a)) u(R(s,a,s') + V_{\pi_{\theta}}(s'))\Bigg).
    \end{split}
\end{equation}

Similar to a risk-neutral MDP, given $\pi_{\theta}$ and the state value function corresponding to this policy, the function $DPr$ is an inherent property of the CPT risk-sensitive MDP (this function can be compared with the function $Pr$ in the proof of risk-neutral policy gradient theorem in \cite{sutton2018reinforcement}, Section 13.2). Therefore, we let $\eta_{cpt}(s) := \sum_{k=0}^{\infty}DPr(s_0\rightarrow s,k,\pi_{\theta})$, which can be considered a \emph{subjective (perceived)} measure of time that the CPT risk-sensitive agent spends in state $s$ when following policy $\pi_{\theta}$ and starting from state $s_0$. In a similar fashion as in risk-neutral ergodic MDPs (see \cite{sutton2018reinforcement}, Section 9.2), $\eta(s)$ can be calculated by solving the following system of linear equations,

\begin{equation} \label{eq:eta}
     \eta_{cpt}(s) = p_0(s) + \sum_{\bar{s} \in S}\eta_{cpt}(\bar{s})\sum_{a \in A}\phi(\pi(\bar{s},a)\prob (s|\bar{s}, a))\frac{\partial u}{\partial V_{\pi_{\theta}}}(s),
\end{equation}

where $p_0$ is the probability distribution of the starting state. Therefore, we can write \eqref{eq:gradexpand} as
\begin{equation}
    \begin{split}
        \nabla V_{\pi_{\theta}}(s_0) =
         \sum_{s}\eta_{cpt}(s)\sum_{a,s'}&\nabla \phi(\pi(s,a)\prob(s'|s, a))\\& u(R(s,a,s') + V_{\pi_{\theta}}(s')).
    \end{split}
\end{equation}

As $\eta_{cpt}(s)$ is positive for all $s$, we can define $\mu_{cpt}(s) = \frac{\eta_{cpt}(s)}{\sum_{s \in S}\eta_{cpt}(s)}$ as the \emph{subjective} limiting (steady-state) distribution of the CPT risk-sensitive MDP, and therefore, we have

\begin{equation} \label{eq:gradcompare}
    \begin{split}
        \nabla V_{\pi_{\theta}}(s_0) \propto\expectation_{\mu_{cpt}(s)}\bigg[\sum_{a,s'}&\nabla \phi(\pi(s,a)\prob(s'|s, a))\\& u(R(s,a,s') + V_{\pi_{\theta}}(s'))\bigg].
    \end{split}
\end{equation}

It is interesting to compare \eqref{eq:gradcompare} with the similar expression in risk-neutral policy gradient theorem, $\expectation_{\mu(s)}\bigg[\sum_{a}\nabla \pi(s,a)Q(s,a)\bigg]$. Due to the non-linear CPT operator (compared to the linear expectation operator), the policy is entangled with the transition probabilities inside the gradient of the cumulative weighting function, and therefore, in the risk-sensitive case, it is not possible to define a stand-alone Q-function as a function of state and action to measure the quality of an action in a given state. As noted by Lin \cite{lin2013stochastic}, this complication has the consequence that the optimal risk-sensitive policy even in the single-agent setting can be stochastic. To further expand the above expression, we can use the chain rule of calculus and write,

\begin{equation} \label{eq:gradproof}
\begin{split}
    \nabla V_{\pi_{\theta}}(s_0) \propto& \expectation_{\mu_{cpt}(s)}\bigg[ \sum_{a,s'}\frac{\partial\phi}{\partial(\pi_{\theta}(s,a)\prob(s'|s,a))}\\&\prob(s'|s,a)(\nabla\pi_{\theta}(s,a)) u(R(s,a,s')+\gamma V_{\pi_{\theta}}(s'))\bigg].
\end{split}
\end{equation}

This is the general case of PG in CPT risk-sensitive MARL. Given the aggregative term $\sigma^{-i}$ in NAMGs, we can rewrite this equation as~\eqref{eq:grad}.

\end{proof}

We now provide an algorithm to estimate the above gradient using samples from a simulator of the environment or a large enough experience dictionary. This approximation scheme which is later used to also estimate the value function is Algorithm~\ref{alg:CPTVEst} is proposed by Jie et al. \cite{jie2018stochastic} to estimate the CPT value of an r.v., $X$, using samples from its distribution. The following Assumption (A2 in \cite{jie2018stochastic}) is needed to guarantee the asymptotic consistency of this estimation algorithm.

\begin{assumption}\label{assumption:u}
The utility functions $u^{+}$ and $u^{-}$ are continuous and non-decreasing on their support $\mathbb{R^+}$ and $\mathbb{R^-}$, respectively.
\end{assumption}

The above assumption also holds for conventional forms of weighting and utility functions in Figures \ref{fig:wieghting} and \ref{fig:util}  \cite{jie2018stochastic}. Given Assumptions~\ref{assumption:w} and \ref{assumption:u}, Proposition 4 in \cite{jie2018stochastic} is verified and for a given r.v. $X$, Algorithm~\ref{alg:CPTVEst} is guaranteed to have asymptotic consistency, i.e., it converges to $\cpt[X]$ asymptotically as the number of samples, $n$, approaches infinity.

\begin{algorithm} [tb]
\caption{CPT Value Estimation}
\label{alg:CPTVEst}
\begin{algorithmic}[1]
\STATE\textbf{Require:} Samples $X_1$,..., $X_n$ from the distribution of r.v. $X$, sorted in ascending order.
\STATE  Let
\begin{equation*}
\begin{split}
    \hat{\rho}_{cpt}^+ &:= \sum^n_{i=1} u^+(X_i)\bigg(\omega^+\bigg(\frac{n+1-i}{n}\bigg)-\omega^+\bigg(\frac{n-i}{n}\bigg)\bigg),\\
    \hat{\rho}_{cpt}^- &:= \sum^n_{i=1} u^-(X_i)\bigg(\omega^-\bigg(\frac{i}{n}\bigg)-\omega^-\bigg(\frac{i-1}{n}\bigg)\bigg).
\end{split}
\end{equation*}
\STATE Return $\hat{\rho}_{cpt} = \hat{\rho}_{cpt}^+ - \hat{\rho}_{cpt}^-$.
\end{algorithmic}
\end{algorithm}

\paragraph{\textbf{Gradient estimation.}} To have a estimate of the gradient in~\eqref{eq:grad}, we need estimates of CPT values corresponding to \\$\phi(\pi(a^i|\bar{s})\prob(\sigma^{-i}|\bar{s})\prob (s|\bar{s}, a))$ and $\frac{\partial\phi}{\partial(\pi_{\theta}(a^i|s)\prob(\sigma^{-i}|\bar{s})\prob(s'|s,a))}$ (which we denote by $\phi'\left(\pi_{\theta}(a^i|s)\prob(\sigma^{-i}|\bar{s})\prob(s'|s,a)\right)$). Note that Assumption \ref{assumption:w} states that $\omega'_{\pm}$ are Lipschitz and therefore, they can be used as independent CPT weighting functions with corresponding cumulative weighting functions $\phi'_{\pm}$. Given the transition probabilities and repeated distributed sampling of rewards and transitions by agents from the environment or the experience dictionaries, the term in brackets corresponding to each state can be estimated using Algorithm~\eqref{alg:CPTVEst}. Furthermore, using these samples and solving a linear system of equations resulting from \eqref{eq:eta}, the subjective steady-state distribution $\mu_{cpt}(s)$ can be found. We note that this estimation algorithm is model-based and requires transition probabilities, however, it does not assume any knowledge of the reward function or the policies of other agents, and is therefore privacy-preserving. Having a policy gradient theorem and a corresponding gradient approximation scheme, we can now develop our distributed actor-critic algorithm.

\section{Distributed Nested CPT Actor-Critic}

Algorithm~\eqref{alg:main} lays out the pseudocode for Distributed Nested CPT Actor-Critic. As can be seen, the critic's value function is estimated using the sampling strategy in Algorithm~\ref{alg:CPTVEst}, and we use the samples from the simulator for bootstrapping (by adding them to the experience dictionary for later use). Although sampling from the simulator for gradient and value function approximation can be computationally intensive, it can become less so as we build the experience dictionary and do away with the simulator. We now prove the asymptotic convergence of the proposed algorithm.

\begin{algorithm*} [h]
\caption{Distributed Nested CPT Actor-Critic}
\label{alg:main}
\begin{algorithmic}[1]
\STATE\textbf{Inputs:} \emph{shared among agents:} Initial state $s^0$, number of samples used for CPT estimation ($n_{max}$), learning rate sequences ($\{\alpha_{cr,t}\}_{t\geq 0},\{\alpha_{ac,t}\}_{t\geq 0}$), and transition probabilities. \emph{Local variables for agent $n$:} Initial $V_{\pi_{\theta_0}}^n$, initial policy parameters ($\theta^n_0$), and an empty experience dictionary $ExpDict^n$.
\STATE \textbf{For each agent $n$, do:}
\STATE Sample action $a^n_0$ from policy $\pi_{\theta^n_0}(.|s_0)$.
\STATE $t \leftarrow 0$.
\STATE\textbf{Repeat}
\STATE \quad Sample $a^n_{t}$ from $\pi_{\theta^n_{t}}(.|s_{t})$.
\STATE \quad Execute $a^n_t$ and observe $r^n_t$, $s_{t+1}$, and $\sigma^{-n}_t$.
\STATE \quad Push $(r_t, s_{t+1},\sigma^{-n}_t)$ to $ExpDict^n(s_t,a^n_t,\sigma^{-n}_t)$.
\STATE \quad \textbf{Critic value estimation:}
\STATE \quad Create empty array $X$ of size $n_{max}$.
\STATE \quad \textbf{for} each $i=1,2,...,n_{max}$, \textbf{do}
\STATE \quad \quad Sample $\hat{a}^n_{t}$ from $\pi_{\theta^n_{t}}(.|s_{t})$ and construct $\hat{\sigma}^{-n}_t$ by observing neighbors.
\STATE \quad \quad Sample $(\hat{r}^n_t, \hat{s}_{t+1})$ from $ExpDict(s_t,\hat{a}^n_t,\hat{\sigma}^{-n}_t)$ if it is large enough, and otherwise from a simulator of the environment.
\STATE \quad \quad Let $X_i = \hat{r}^n_t + \gamma V_{\pi_{\theta}}^n(\hat{s}_{t+1})$.
\STATE \quad \quad If the sample came from a simulator, push $(\hat{r}^n_t, \hat{s}_{t+1})$ to $ExpDict(s_t,\hat{a}^n_t,\hat{\sigma}^{-n}_t)$ for later use.
\STATE \quad \textbf{end for}
\STATE \quad Estimate $\hat{V}_{\pi_{\theta_t}}^n(s_t)$ using Algorithm~\ref{alg:CPTVEst}.

\STATE \quad \textbf{Critic step:}
\STATE \quad Calculate the TD-error:
\begin{equation*}
    \delta_t := \hat{V}_{\pi_{\theta_t}}^n(s_t) - V_{\pi_{\theta_t}}^n(s_t).
\end{equation*}
\begin{equation*}
    V_{\pi_{\theta_t}}^n(s_t) \leftarrow V_{\pi_{\theta_t}}^n(s_t) + \alpha_{cr,t}\delta_t.
\end{equation*}

\STATE \quad \textbf{Actor step:}
\STATE \quad Compute $\nabla V^n_{\pi_{\theta_t}}(s_0)$ using the gradient estimation scheme described in Section 4.
\begin{equation*}
    \theta_{t+1}^n := \theta_t^n + \alpha_{ac,t}\nabla V^n_{\pi_{\theta_t}}(s_0).
\end{equation*}
\STATE \quad $t \leftarrow t+1.$
\STATE \textbf{Until convergence}
\end{algorithmic}
\end{algorithm*}

\subsection{Convergence of the Critic} \label{sec:critic}
In order to calculate the state value function corresponding to the current policy, we define the following $TD(0)$ CPT operator (note that the agent's superscript $n$ has been dropped),

\begin{equation} \label{eq:oper}
    T_{cpt}V_{\pi_{\theta}}(s) = \cpt_{\pi_{\theta}(.|s)\times \prob (.|s,a)}\left[R(s,a,s') + \gamma V_{\pi_{\theta}}(s')\right] 
\end{equation}

The following assumption is needed to ensure that the operator in~\eqref{eq:oper} is a sup-norm contraction.

\begin{assumption}\label{assumption:crit}
The utility functions $u^+$ and $u^-$ are invertible (denoted by $u_{+}^{-1}$ and $u^{-1}_{-}$) and differentiable (denoted by $u^{'}_{+}$ and $u^{'}_{-}$), and we have $u^+(0)=u^-(0)=0$. Further, there exists $\beta \in (0,1)$ such that $\int_0^{\gamma c} \omega^+(\prob(X<x))u^{'}_{+}(\gamma c -x)dx + \int_0^{\gamma c} \omega^-(\prob(X>x))u^{'}_{-}(x)dx \leq \beta c$ holds for any $c>0$ and any non-negative real-valued r.v. $X$, where $\gamma$ is the discount factor of the MDP.
\end{assumption}

Similar to Assumptions \ref{assumption:w} and \ref{assumption:u}, the above assumption can also be verified to hold for typical analytical forms of $\omega^{\pm}$ and $u^{\pm}$ in Figures \ref{fig:util} and \ref{fig:wieghting} as shown by Lin et al. \cite{lin2018probabilistically}. Under this assumption, based on Theorem 6 in \cite{lin2018probabilistically}, the $TD$ operator \eqref{eq:oper} is a sup-norm contraction on a Banach space defined over the MDP's state space that includes all possible state value functions $V_{\pi_{\theta}}$. Therefore, for every $V_{\pi_{\theta}}, \bar{V}_{\pi_{\theta}} \in B(S)$, there exists $\alpha \in (0,1)$ such that

\begin{equation}
    \|T_{cpt}V_{\pi_{\theta}}-T_{cpt}\bar{V}_{\pi_{\theta}}\|_{\infty}\leq \alpha \|V_{\pi_{\theta}}-\bar{V}_{\pi_{\theta}}\|_{\infty}
\end{equation}

\begin{remark}{(\emph{Applicability of linear function approximation})}
The contraction of the $TD$ operator \eqref{eq:grad} has only been validated for a tabular representation of the state-value function \cite{lin2018probabilistically}. We also assessed the possibility of approximating the state value function using linear functions for scaling up the proposed actor-critic algorithm to large or continuous state spaces. The traditional proof of convergence for a $TD$ critic with linear function approximation requires the contraction of this operator with respect to the $L^2$ norm defined over the steady-state distribution of the MDP (Lemma $4$ in Tsitsiklis and Van Roy \cite{tsitsiklis1997analysis}). However, via counterexample, it can be seen that this property does not necessarily hold for the $TD$ operator \eqref{eq:grad}.
\end{remark}

\begin{remark} The previous remark and the fact that we were required to limit ourselves to tabular representations implies that mathematical properties of CPT-sensitive MDPs do not allow them to belong to the family of robust MDPs \cite{osogami2012robustness} and enjoy properties such as linear approximation for the state value function and a convenient gradient estimation scheme as with coherent risk measures \cite{tamar2015policy}. It would be interesting to study and look at this limitation from a cognitive perspective and to see whether dealing with dynamic CPT risk-sensitive continuous control is cognitively cumbersome for humans in behavioral experiments.
\end{remark}

\subsection{Convergence of the Actor}

For notational simplicity, we denote $V^i_{\pi_{\theta}}(s)$ by $V_i(\theta,s)$. We prove the convergence of our AC algorithm to a subjective MPNE of the game if the following assumptions are satisfied.

\begin{assumption}{}\label{assumption:multconv}
For each agent $i$, $V_i(\theta,s)$ is convex with respect to $\theta^i$. Also, the gradient is uniformly bounded, i.e., for each agent $i$ there exists $\xi_i$ such that,
\begin{equation}
	\sup_{s \in\mathcal{S}}\left\|\nabla_{\theta_i} V_i(\theta,s)\right\| \leq \xi_i.
\end{equation}
\end{assumption}

\begin{assumption}{}\label{assumption:multilip}

The pseudo-gradient mapping value function,
$\nabla_\theta V(\theta, s)=\operatorname{col}\left(\nabla_{\theta_i} V_1(\theta, s), \ldots, \nabla_{\theta_N} V_N(\theta, s)\right)$,
is strongly monotone with respect to $\theta$, i.e., for every
$\theta, \theta' \in \Theta, s \in \mathcal{S}$, there exists $\mu > 0$ such that
\begin{equation*}
	\left(\nabla_\theta V(\theta, s)-\nabla_\theta V\left(\theta^{ \prime}, s\right)\right)^{\top}\left(\theta-\theta^{\prime}\right) \geq \mu\left\|\theta-\theta^{\prime} \right\|^2.
\end{equation*}

Furthermore, this mapping is Lipschitz continuous, i.e.,
\begin{equation*}
\left\|\nabla_\theta V(\theta, s)-\nabla_\theta V\left(\theta^{\prime}, s\right)\right\| \leq L_a\left\|\theta-\theta^{\prime}\right\|.
\end{equation*}
\end{assumption}

\begin{theorem}{}\label{theo:main}(\textbf{Convergence of the actor})
Given Assumptions \ref{assumption:multconv} and \ref{assumption:multilip} and a critic and an actor with learning steps such that,
\begin{equation}
\sum_{t=0}^{\infty}\alpha_{ac,t} = \infty, \quad \sum_{t=0}^{\infty}\alpha_{cr,t} = \infty ,\\
\quad \sum_{t=0}^{\infty}\alpha_{cr,t}^2 < \infty , \quad \sum_{t=0}^{\infty}\alpha_{ac,t}^2 < \infty , \quad \lim_{t\rightarrow \infty}\frac{\alpha_{ac,t}}{\alpha_{cr,t}} = 0,
\end{equation}

algorithm \eqref{alg:main} converges to the unique subjective Markov perfect Nash equilibrium of the NAMG, asymptotically.

\end{theorem}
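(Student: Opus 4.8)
The plan is to recognize Theorem~\ref{theo:main} as a two-timescale stochastic approximation argument layered on top of a monotone-operator fixed-point analysis. The overall structure I would use has three ingredients: (i) the critic, running on the fast timescale $\alpha_{cr,t}$, tracks the fixed point of the contraction operator $T_{cpt}$ from \eqref{eq:oper} for the currently frozen policy parameter $\theta_t$; (ii) the actor, running on the slow timescale $\alpha_{ac,t}$, then sees an effectively converged critic (this is exactly what $\lim_t \alpha_{ac,t}/\alpha_{cr,t}=0$ buys us) and performs a noisy projected/unprojected gradient ascent on each agent's value $V_i(\theta,s_0)$; and (iii) the limiting ODE of the actor recursion is the pseudo-gradient flow $\dot{\theta} = \nabla_\theta V(\theta, s_0)$, whose unique equilibrium is the subjective MPNE. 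I would begin by formally defining the subjective MPNE as a $\theta^*$ at which no agent $i$ can improve $V_i(\theta,s_0)$ by unilaterally changing $\theta^i$; under Assumption~\ref{assumption:multconv} (per-agent convexity — here I would note that the statement as written should read \emph{concavity} in $\theta^i$ for a maximization problem, or equivalently one works with $-V_i$) the first-order stationarity condition $\nabla_{\theta_i} V_i(\theta^*, s_0) = 0$ for all $i$ is necessary and sufficient, so the set of MPNE coincides with the set of zeros of the pseudo-gradient mapping $\nabla_\theta V(\theta,s_0)$.

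The first main step is existence and uniqueness of the equilibrium. Here I would invoke Assumption~\ref{assumption:multilip}: a strongly monotone, Lipschitz-continuous mapping on $\Theta$ has exactly one zero (equivalently, the associated variational inequality admits a unique solution — this is the standard Minty/Browder argument, or the Banach fixed-point theorem applied to the natural map $\theta \mapsto \theta - \kappa\,\nabla_\theta V(\theta,s_0)$ for small $\kappa$). This simultaneously establishes the "unique subjective MPNE" clause in the theorem statement. The second step is the critic analysis: with $\theta$ held fixed, the critic update in Algorithm~\ref{alg:main} is a Robbins–Monro recursion whose mean field is the contraction $T_{cpt} - I$ (sup-norm contraction by Assumption~\ref{assumption:crit} and Theorem~6 of \cite{lin2018probabilistically}), and whose noise is the CPT-estimation error from Algorithm~\ref{alg:CPTVEst}, which is asymptotically unbiased and has bounded variance (Proposition~4 of \cite{jie2018stochastic}, valid under Assumptions~\ref{assumption:w} and \ref{assumption:u}); with $\sum \alpha_{cr,t} = \infty$, $\sum \alpha_{cr,t}^2 < \infty$ this converges to the unique fixed point $V_{\pi_{\theta}}$, and on the slow timescale the critic error is $o(1)$.

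The third step is the actor analysis. Using Theorem~\ref{theo:cptpg}, the actor increment is a consistent (up to the critic error and the CPT-sampling noise) estimate of $\nabla_{\theta_i} V_i(\theta,s_0)$, which is uniformly bounded by $\xi_i$ (Assumption~\ref{assumption:multconv}); the boundedness of iterates and of the gradient, together with the square-summability of $\alpha_{ac,t}$, let me apply the standard two-timescale stochastic-approximation theorem (Borkar, or Kushner–Clark) to conclude that $\theta_t$ asymptotically tracks the ODE $\dot\theta = \nabla_\theta V(\theta,s_0) + (\text{vanishing critic bias})$. Finally I would show that this ODE has $\theta^*$ as a globally asymptotically stable equilibrium: taking the Lyapunov function $\mathcal{L}(\theta) = \tfrac12\|\theta - \theta^*\|^2$, strong monotonicity gives $\dot{\mathcal{L}} = (\theta-\theta^*)^\top \nabla_\theta V(\theta,s_0) = (\theta-\theta^*)^\top\big(\nabla_\theta V(\theta,s_0) - \nabla_\theta V(\theta^*,s_0)\big) \le -\mu\|\theta-\theta^*\|^2 < 0$ off equilibrium, and Lipschitzness plus boundedness handle the vanishing perturbation via a standard converging-input/converging-state argument. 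Chaining (iii) $\to$ ODE $\to$ global stability gives $\theta_t \to \theta^*$ a.s., which is the claim.

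I expect the main obstacle to be the rigorous handling of the bias in the actor's gradient estimate. Unlike a textbook policy-gradient setting, the gradient in \eqref{eq:grad} is itself defined through the \emph{subjective} steady-state distribution $\mu^i_{cpt}$, which depends on the current value function estimate via \eqref{eq:eta}; so the actor is being driven not by $\nabla_\theta V(\theta_t,s_0)$ but by a quantity computed from the imperfect critic $V^n_{\pi_{\theta_t}}$ and from finitely many CPT samples. Controlling this requires a quantitative Lipschitz dependence of $\mu^i_{cpt}$ (and of $\phi'$, evaluated via $\omega'_\pm$) on the critic error, propagating the $o(1)$ critic bound through the linear system \eqref{eq:eta}, and arguing that the resulting perturbation to the mean field is summable against $\alpha_{ac,t}$ or vanishes in the timescale-separation limit — this is where Assumption~\ref{assumption:w} (Lipschitz $\omega'_\pm$) does its real work, and where the two-timescale condition $\alpha_{ac,t}/\alpha_{cr,t}\to 0$ is essential rather than cosmetic.
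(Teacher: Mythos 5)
Your proposal is sound and reaches the same conclusion, but by a genuinely different route than the paper. The paper's proof is a direct, essentially deterministic discrete-time argument: uniqueness of the subjective MPNE is obtained by citing Rosen's theorem under Assumptions~\ref{assumption:multconv} and \ref{assumption:multilip}, and convergence is shown by expanding $\|\Delta\theta^{t+1}_n\|^2$ for the actor recursion with the \emph{exact} gradient, bounding it via $\xi_n$, summing over agents, applying strong monotonicity with $\nabla_\theta V(\theta^*)=0$, and telescoping so that square-summability of $\alpha_{ac,t}$ bounds the quadratic term while $\sum_t \alpha_{ac,t}=\infty$ forces $\|\Delta\theta^t\|\to 0$; the actor--critic coupling and the role of $\alpha_{ac,t}/\alpha_{cr,t}\to 0$ are then dispatched in one line by invoking Borkar's two-timescale theorem after the proof. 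You instead obtain uniqueness from monotone-operator theory (Minty/Browder or a Banach fixed point for $\theta\mapsto\theta-\kappa\nabla_\theta V$), and convergence via the ODE method for two-timescale stochastic approximation with the Lyapunov function $\tfrac12\|\theta-\theta^*\|^2$. Your route buys an explicit treatment of exactly what the paper glosses over --- the stochasticity and bias of the actor's gradient estimate, the dependence of $\mu^i_{cpt}$ on the imperfect critic through \eqref{eq:eta}, and the non-cosmetic role of the timescale separation --- at the cost of needing the additional quantitative perturbation bounds you describe; the paper's route is more elementary and self-contained for the actor step but implicitly treats the gradient as exact and defers all noise/coupling issues to the citation of Borkar. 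One caution: your Lyapunov inequality $(\theta-\theta^*)^\top\bigl(\nabla_\theta V(\theta,s_0)-\nabla_\theta V(\theta^*,s_0)\bigr)\le-\mu\|\theta-\theta^*\|^2$ has the opposite sign to the strong monotonicity in Assumption~\ref{assumption:multilip}; for the ascent dynamics $\dot\theta=\nabla_\theta V$ to be stable you need $-\nabla_\theta V$ to be strongly monotone (Rosen's diagonal strict concavity), consistent with your own remark that Assumption~\ref{assumption:multconv} should read concavity. The paper harbors the same sign-convention wrinkle (its expansion uses $\|\Delta\theta^t_n-\alpha_{ac,t}\nabla_{\theta_n}V_n\|^2$ despite the ascent update with a plus sign), so this is a fixable convention issue rather than a gap in your argument, but you should state the monotonicity hypothesis for the correct sign before running the Lyapunov step.
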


\proof
We prove that the actor update will converge to the unique Nash policy of the Markov game, which exists if Assumptions \ref{assumption:multconv} and \ref{assumption:multilip} are satisfied, starting from any initial condition. We rewrite the actor update for agent $n$ below,

\begin{equation*}
\theta_{t+1}^n := \theta_t^n + \alpha_{ac,t}\nabla_{\theta}V^n(\theta,s_0).
\end{equation*}

Under Assumptions 4 and 5, based on Theorem 2 in Rosen \cite{rosen1965existence}, there exists a unique MPNE for the NAMG. Note that this CPT-sensitive (subjective) MPNE can be different from the MPNE of the game when the agents are risk-neutral. Consider the parameter vector $\theta^*$ as the vector that constructs this unique Nash policy, for which we have $\nabla_{\theta_i}J_i(\theta^*) = 0$ for all $i$. Per Assumption \ref{assumption:multconv} and by defining
$\Delta \theta_n^t = \theta^t_n - \theta^*_n$, we have

\begin{equation}\label{eq:tmp}
\begin{aligned}
\left\|\Delta \theta_n^{t+1}\right\|^2&=\left\|\Delta \theta_n^t-\alpha_{ac,t} \nabla_{\theta_n} V_n\left(\theta^t\right)\right\|^2 \\
& =\left\|\Delta \theta_n^t\right\|^2+\left(\alpha_{ac,t}\right)^2\left\|\nabla_{\theta_n} V_n\left(\theta ^t\right)\right\|^2 \\
& -2\alpha_{ac,t} \nabla_{\theta_n}\left(V_n\left(\theta^t\right)\right)^{\top} \Delta \theta_n^t \\
& \leq\left\|\Delta \theta_n^t\right\|^2+\left(\alpha_{ac,t}\right)^2 \xi_n^2-2 \alpha_{ac,t} \nabla_ {\theta_n} V_n\left(\theta^t\right)^{\top} \Delta \theta_n^t.
\end{aligned}
\end{equation}

By summing the above equation over different $n \in \{1,...,N\}$ and defining $\Delta \theta^t=\operatorname{col}\left(\Delta \theta_1^t \ldots, \Delta \theta_N^t\right) $, we have

\begin{equation}
\left\|\Delta \theta^{t+1}\right\|^2 \leq\left\|\Delta \theta^t\right\|^2+\sum_{n \in \mathcal{N} } \xi_n^2\left(\alpha_{ac,t}\right)^2-2 \alpha_{ac,t}\nabla_\theta V^{t^{\top}} \Delta \theta^t,
\end{equation}
   
where $ \nabla_\theta V^t= \operatorname{col}\left(\nabla_{\theta_1} V_1\left(\theta^t\right), \ldots, \nabla_{\theta_N} V_N\left(\theta ^t\right)\right) $.
We also know that
$\nabla_\theta V^*=\operatorname{col}\left(\nabla_{\theta_1} V_1\left(\theta^*\right), \ldots, \nabla_{\theta_N} V_N\left(\theta ^*\right)\right)=0$. Therefore, according to assumption \ref{assumption:multilip},

\begin{equation}
\begin{aligned}
-\nabla_\theta V^{t^{\top}} \Delta \theta^t & =-\left(\nabla_\theta J^t-\nabla_\theta V^*\right)^{\top} \Delta \theta^t \\
& =-\left(\nabla_\theta V^t\left(\theta^t\right)-\nabla_\theta V^t\left(\theta^*\right)\right)^{\top} \Delta \theta^t \\
& \leq-\mu\left\|\Delta \theta^t\right\|^2 .
\end{aligned}
\end{equation}

Finally, with telescopic summation,

\begin{equation}
\begin{aligned}
\lim _{t \rightarrow \infty}\left\|\Delta \theta^t\right\|^2 \leq\left\|\Delta \theta^0\right\|^2 & +\underbrace{\sum_{n \in \mathcal{N}} \xi_n^2 \sum_{t=0}^{\infty}\left(\alpha_{ac,t}\right)^2}_{T_1} \\
& -\underbrace{\sum_{t=0}^{\infty} \mu \alpha_{ac,t}\left\|\Delta \theta^t\right\|^2}_{T_2} .
\end{aligned}
\end{equation}

Since $\mu > 0$, $T_2 \geq 0$. Therefore, as $T_1$ is bounded, $T_2$ is bounded as well. Consequently, as $\sum_{t=0}^{\infty} \alpha_{ac,t} = \infty$,
$\lVert \Delta \theta^t \rVert^2$ converges to $0$ as $t \rightarrow \infty$ and as a result $\theta$ converges to $\theta^*$.
\endproof

Given the asymptotic proof of convergence for the actor and the critic and considering the conditions of the learning step sequences in Theorem~
\ref{theo:main}, we can apply Theorem 1.1 of Borkar \cite{borkar1997stochastic}, which shows the asymptotic convergence of the AC algorithm to the unique MPNE of the NAMG. Note that if Assumptions \ref{assumption:multconv} and \ref{assumption:multilip} do not hold, given the actor's gradient expression, we can only ensure the convergence of the AC algorithm to a locally optimal policy parameter for each agent.

\section{Numerical Experiment}
To examine the empirical convergence of Distributed Nested CPT AC, we constructed a risk-sensitive NAMG with an interpretable design for measuring the effect of loss aversion in risk-sensitive agents on the converged policies. Note that due to the CPT operator, Assumptions~\ref{assumption:multconv} and \ref{assumption:multilip} are hard to verify in any experimental setup and we did not expect the algorithm to converge to the subjective MPNE (which may not be unique if the aforementioned assumptions are not satisfied). In the constructed NAMG, there are four agents, five states, and three possible actions $(\mathcal{N} = 4, \mathcal{S}=\{0,1,2,3,4\}, \mathcal{A}=\{0,1,2\})$, and the communication graph is fully-connected. The reward function is defined as

\begin{equation}
R^i(s,a^i, \sigma^{i}(a^{-i})) = R_{self}^i(s) + \sigma^i(a^{-i}) R^i_{com}(s)a^i, 
\end{equation}

where the first term is the reward solely affected by the agent's action, and the second term is the reward that is affected by the actions of the neighboring $community$ of the agent. The aggregative term is

\begin{equation}
	\sigma^i(a^{-i}) = \frac{1}{N-1}( \sum_{j \in \mathcal{N} \backslash i}a_j),
\end{equation}

which indicates a communication graph with equal weights among the neighbors. The reward coefficient $R_{self}^i(s)$ for agent \(i\) is randomly generated from

\begin{equation}
	R_{self}(s,a^i) \sim Normal(0.5,0.1), \forall i \in {1,...,N}.
\end{equation}

Also, the reward coefficients $R_{com}^i(s)$ is randomly generated form

\begin{equation}
	R_{com}^i(s) \sim  5 \cdot uniform[-0.5,0.5].
\end{equation}

The above setup implies a high risk for the agent if it decides to take an action greater than \(a^i = 0\) and become involved with its neighboring community, e.g., take a financial risk in an interactive market. Thus, it can be said that each agent in this non-cooperative environment chooses by its action how much risk it wants to take and to what degree it wants to interact with the community (other agents who could, for instance, be economic, political, or social competitors), and ties its received reward to their actions. If the agent chooses the action \(a^i = 0\), it will settle for a low, but risk-free profit. However, when choosing another action, depending on the actions of the other agents, it can make a significant profit or loss that is also affected by stochasticity of the environment. Our objective is to study the agents' risk-aversion levels based on the parameter \(\lambda\) in the cumulative perspective theory utility function (Figure~\ref{fig:util}). The higher the value of this parameter, the more loss-averse the agent is. We expect that in the designed non-cooperative environment, a more loss-averse agent will choose the action \(a = 0\) with a higher probability and have a tendency to become socially isolated or conservative. To evaluate this hypothesis, we run the proposed algorithm for this risk-sensitive NAMG and for four different loss-aversion scenarios. In the first scenario, all agents are risk-neutral (corresponding to a vanilla AC algorithm with linear function approximation). In the second scenario, all agents have the same level of loss aversion ($\lambda=2.6$). In the third scenario, only Agent 1 is risk-sensitive ($\lambda=2.6$), and other agents are risk-neutral. Finally, in the last scenario, all agents are risk-sensitive, but Agent one has a higher loss aversion coefficient ($\lambda=3.2$), while others have $\lambda=2.6$. Figure~\ref{fig:vals} shows the convergence of the value functions corresponding to one of the states in the second scenario. Figure~\ref{fig:pols} shows the probability of choosing action $0$ (a quantitative indicator of social conservatism) in the converged policies of agents in each scenario. As observed, the level of social isolation and the probability of choosing the conservative action \(a = 0\) is proportional to the risk-aversion level of the agents in the community.

\begin{figure}[htbp]
 \centering
 \includegraphics[scale=0.2]{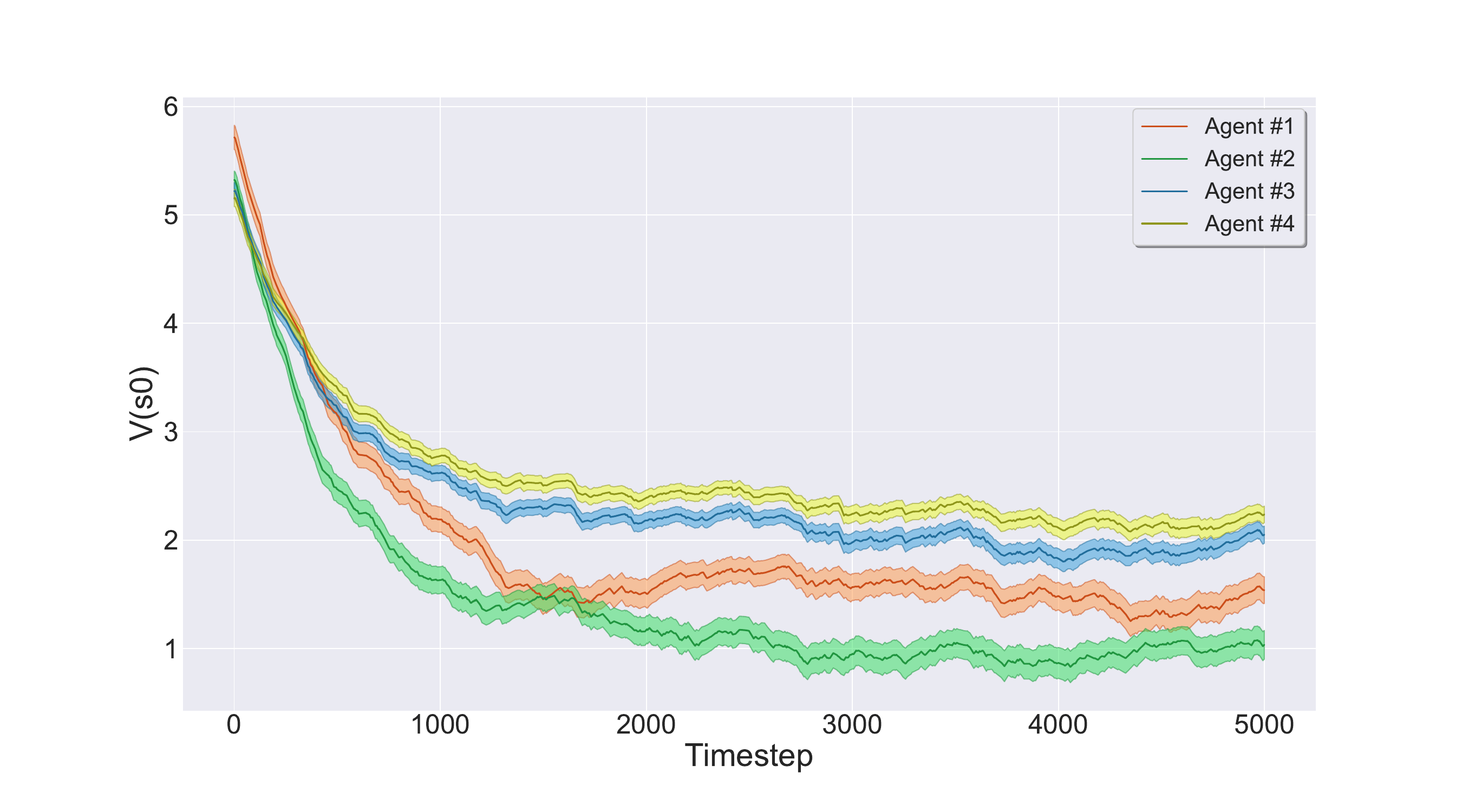}
  \caption{Smoothed mean value function of a given state over eight independent runs in Distributed Nested CPT-AC for scenario 2 (all agents are risk-sensitive with $\lambda=2.6$).}
 \label{fig:vals}
\end{figure}

\begin{figure}[htbp]
 \centering
 \includegraphics[scale=0.2]{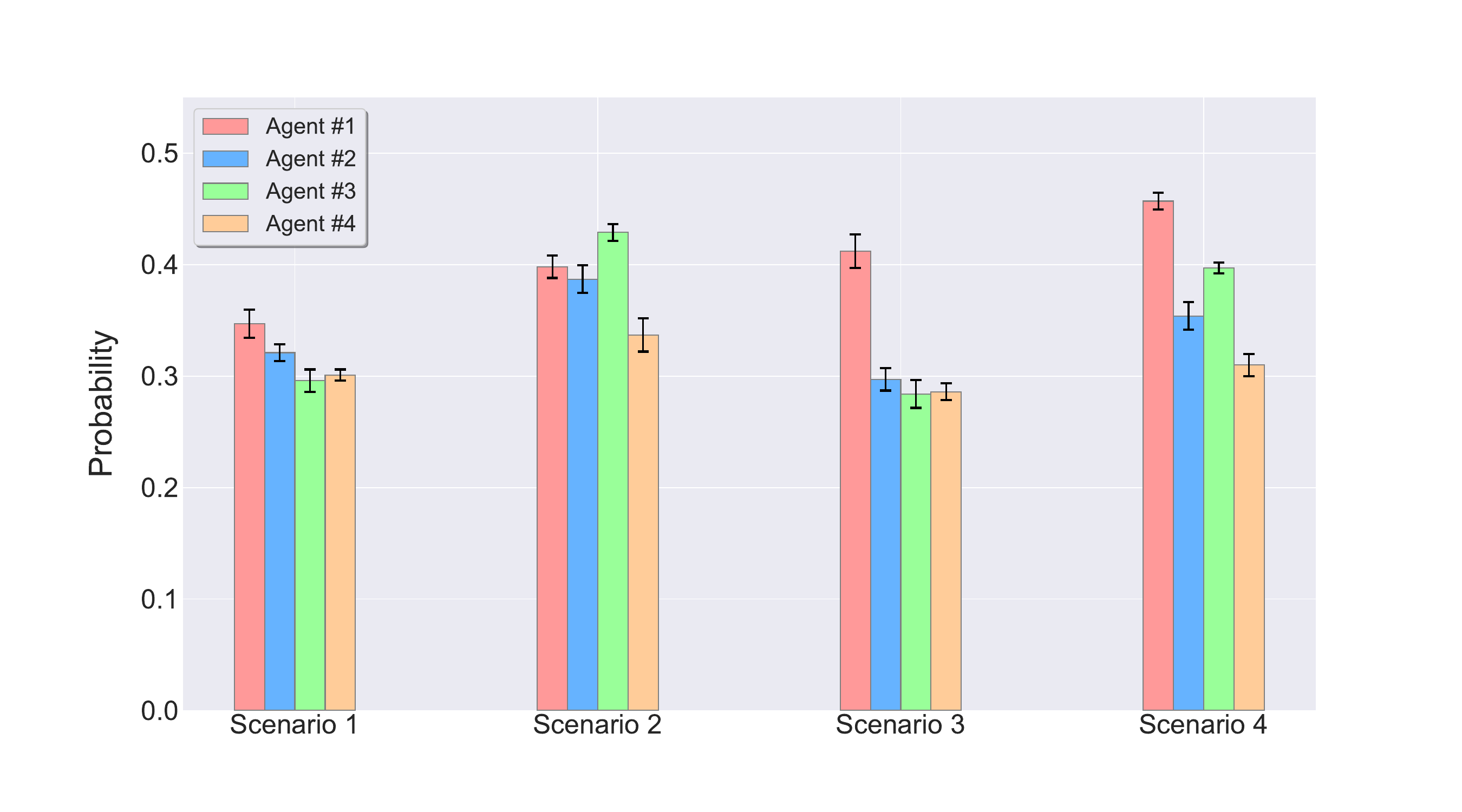}
  \caption{Mean converged policies over eight independent runs for different loss aversion scenarios. Scenario 1: all agents are risk-neutral, scenario 2: all agents are risk-sensitive ($\lambda=2.6$), scenario 3: only Agent 1 is risk-sensitive ($\lambda=2.6$), scenario 4: Agent 1 has a higher loss aversion coefficient ($\lambda=3.2$) than others ($\lambda=2.6$).}
 \label{fig:pols}
\end{figure}

\section{Conclusion and Future Works}
In this work, we proposed a distributed risk-sensitive MARL algorithm in NAMGs with theoretical convergence guarantees based on cumulative prospect theory, a cognitive risk measure that broadens the scope of the traditionally adopted coherent risk measure. We empirically showed the positive correlation between loss aversion and social isolation of agents. We observed that scaling the proposed algorithm to larger environments and continuous control is not compatible with theoretical convergence gurantees. However, a plausible direction of future work is the appropriate use of function approximation and deep RL methods to tackle the curse of dimensionality for large state and action spaces in CPT-sensitive RL, and in general risk-averse RL \cite{urpi2021risk}, in an empirical framework, albeit without theoretical convergence gurantees.









\bibliographystyle{ACM-Reference-Format} 
\bibliography{aamas}


\end{document}